\tikzstyle{node}=[fill=white, draw=black, shape=circle, minimum size=1mm, ultra thick]
\tikzstyle{small box}=[fill=white, draw=black, shape=rectangle, minimum height=0.5cm, minimum width=0.5cm, ultra thick]
\tikzstyle{weyl}=[fill=white, draw={rgb,255: red,0; green,0; blue,109}, shape=rectangle, minimum height=0.5cm, minimum width=0.5cm]
\tikzstyle{filled_node}=[fill=black, draw=black, shape=circle, minimum size=1mm, ultra thick]
\tikzstyle{thick}=[-, ultra thick]
\tikzstyle{blue_thick}=[-, ultra thick, draw=blue]
\tikzstyle{dashes}=[-, dashed, draw={rgb,255: red,191; green,191; blue,191}, dash pattern=on 2mm off 1mm, fill={rgb,255: red,244; green,228; blue,0}]
\tikzstyle{thick_arrow}=[ultra thick, ->]
\tikzstyle{dash_1}=[-, dashed]
\tikzstyle{dash_2}=[-, dashed, fill={rgb,255: red,246; green,235; blue,255}]
\tikzstyle{dash_3}=[-, dashed, fill={rgb,255: red,229; green,255; blue,181}]
\tikzstyle{dash_4}=[-, dashed, fill={rgb,255: red,255; green,209; blue,153}]
\tikzstyle{red_thick}=[-, ultra thick, draw=red]
\tikzstyle{dash_5}=[-, dashed, fill={rgb,255: red,225; green,255; blue,254}]
\newtheorem{theorem}{Theorem}
\newtheorem{proposition}[theorem]{Proposition}
\newtheorem{definition}[theorem]{Definition}
\theoremstyle{remark}
\newtheorem{remark}[theorem]{Remark}
\newtheorem{example}[theorem]{Example}
\newcommand{\BibTeX}{B\kern-.05em{\sc i\kern-.025em b}\kern-.08em\TeX}
\DeclareMathOperator{\End}{End}
\DeclareMathOperator{\Hom}{Hom}
\DeclareMathOperator{\Bell}{B}
\begin{document}


\begin{frontmatter}


\paperid{264} 


\title{
	Connecting Permutation Equivariant Neural Networks and Partition Diagrams
}


\author[A]{\fnms{Edward}~\snm{Pearce-Crump}\thanks{Corresponding Author. Email: ep1011@ic.ac.uk}}

	\address[A]{Imperial College London, United Kingdom}


\begin{abstract}
	Permutation equivariant neural networks are often constructed 
	using tensor powers of $\mathbb{R}^{n}$
	as their layer spaces.
	We show that
	all of the weight matrices 
	that appear in these neural networks
	can be obtained from
	Schur--Weyl duality between the symmetric group and the partition algebra.
	In particular, we adapt
	Schur--Weyl duality 
	to derive a simple, diagrammatic method for calculating the weight matrices themselves.

\end{abstract}

\end{frontmatter}

\section{Introduction}

Encoding permutation symmetries into neural networks has proven 
to be very useful for performing a large number of machine learning tasks.
The use cases range from standard examples 
such as learning from sets \cite{deepsets} and graphs \cite{morris22a}
through to predicting dynamics of objects in computer vision \cite{guttenberg2016}, 
modelling composition in natural language \cite{gordon2020}, and even
designing auctions that maximise expected revenues in economics \cite{rahme}.

Existing work on permutation equivariant neural networks 
using tensor power spaces of $\mathbb{R}^n$ as their layers
has focused on two main areas: 
designing networks that encode permutation symmetries on sets of data 
for specific 
applications,
and creating 
more general
permutation equivariant functions
for learning from data 
that lives on higher-order structures, such as graphs.
For the former,
\citet{qi2017pointnet} constructed a permutation equivariant neural network to learn from
point cloud data.
\citet{deepsets} developed a permutation equivariant neural network
to learn from sets of data, and used it for image tagging and set anomaly detection tasks. 
\citet{Hartford} modelled interactions between different sets of objects
using a permutation equivariant neural network.
For the latter, 
\citet{hy2019covariant} 
considered higher order relations between sets of indices instead,
and showed that a number of operations on the resulting tensor power spaces of $\mathbb{R}^n$ are permutation equivariant.
\citet{maron2018} then
studied
the problem of classifying all of the linear permutation equivariant and invariant neural network layer functions on tensor power spaces of $\mathbb{R}^n$, with their motivation coming from learning relations between the nodes of graphs. 
They characterised all of the learnable, linear, permutation equivariant layer functions 
from a $k$-order tensor of $\mathbb{R}^n$ to an $l$-order tensor of $\mathbb{R}^n$
in the practical cases (specifically, when $n \geq k+l$). 
Their method used equalities involving Kronecker products to obtain a number of fixed point equations which they then solved to find a basis, in tensor form, for the layer functions under consideration.
\citet{pan22} went on to establish a method for organising the computation of the layer functions that appeared in \citet{maron2018}, and applied it to the task of predicting the efficacy of certain drug combinations.
\citet{finzi} developed a numerical algorithm to calculate the weight matrices for permutation and other group equivariant neural networks for small values of $n$, $k$ and $l$.

In this paper, we show that an entirely different approach 
from the one that appears in \citet{maron2018} can be used
to obtain a full characterisation 
of all of the possible permutation equivariant weight matrices 
that appear between any two tensor power spaces of $\mathbb{R}^n$. 
The starting point for our approach is 
Schur--Weyl duality, 
a result that 
commonly appears in 
the algebraic combinatorics and representation theory literature
\cite{BenHal1, BenHal2, BHH, HalRam, Jones, Martin0, Martin1, Martin2}.
We describe Schur--Weyl duality in more detail in the next section.
Duality itself appears in many areas of mathematics and physics \cite{atiyah2007duality} 
as a concept for understanding one object through two different viewpoints.
In this paper, we show that the weight matrices, which have permutation symmetry --- 
the first viewpoint --- can be obtained analytically through a so-called partition 
vector space consisting of combinatorial diagrams that partition sets 
into disjoint subsets --- the second viewpoint.

Schur--Weyl duality has proven to be the cornerstone of many of the results that 
have appeared recently in the quantum machine learning literature 
\cite{east2023, larocca, nguyen2022, ragone2022, schatzki2024, zheng}.
It has only recently appeared in the ``classical" machine learning literature \cite{pearcecrumpB}
where it was used to fully characterise the weight matrices 
that appear between any two tensor power spaces of $\mathbb{R}^n$ for three compact groups.
With our contribution, we add to this growing body of work that shows that 
Schur--Weyl duality is a powerful principle for constructing group equivariant 
neural network architectures.





\section{Schur--Weyl Duality} \label{schurweyl}

Schur--Weyl duality is a result that first appeared 
in a paper written in 1927 by Issai Schur \cite{schur1927}; however, this result was mostly 
a reformulation of his own ideas that appeared 
in his doctoral thesis of 1901 in a different form \cite{schur1901}.
In spite of this, Schur--Weyl duality only became well-known 
through the work of
Hermann Weyl \cite{weyl}.
Schur wanted to understand all of the irreducible representations of the general linear group
$GL_n$. 
He lived at a time where the irreducibles of the symmetric group had been characterised 
by Young \cite{young}
in the years preceding his own contribution.
Young showed that the irreducibles of the symmetric group $S_n$ correspond bijectively with
all possible integer partitions of $n$.
Schur used this result to establish a one-to-one correspondence between the
irreducibles of the general linear group $GL_n$ 
and the irreducibles of the symmetric group $S_k$
that appear in the decomposition of 
the tensor power space
$(\mathbb{R}^{n})^{\otimes k}$, namely
\begin{equation} \label{GLnSWduality}
	(\mathbb{R}^{n})^{\otimes k}
	\cong \bigoplus_{\lambda \in \Lambda(k,n)} 
	V_{n}^{\lambda}
	\otimes 
	S_k^{\lambda} 
\end{equation}
In (\ref{GLnSWduality}),
the irreducibles $V_{n}^{\lambda}$ of the general linear group $GL_n$ are indexed 
by the same integer partitions $\lambda$ of $k$ into at most $n$ parts 
that index irreducibles $S_k^{\lambda}$ of the symmetric group $S_k$.
It is this result that became known as Schur--Weyl duality.

However, a number of other Schur--Weyl dualities have appeared since Schur's discovery
\cite{Brauer, HalRam2022, Jones, Martin0, Martin1, Martin2}.
The Schur--Weyl duality that is
the focus of this paper is the
one that exists between
the symmetric group $S_n$ and
the partition algebra $P_k^k(n)$
that was simultaneously found by 
\citet{Martin0, Martin1, Martin2} and \citet{Jones}.

Before stating what this Schur--Weyl duality is, we need to define the partition algebra.
To do this, we require the following two definitions.
We write $[n]$ to represent the set $\{1, \dots, n\}$ throughout this paper.

\begin{definition}
	A \textbf{set partition} $\pi$ of $[2k]$ is
	a partition of the set $[2k]$ into a number of disjoint subsets.
	We call the subsets of $\pi$ \textbf{blocks}.
\end{definition}

\begin{definition}
	We define a \textbf{diagram} $d_\pi$ from each set partition $\pi$ of $[2k]$
	that has two rows of vertices and edges between vertices such that there are
	\begin{enumerate}
		\item $k$ black vertices on the top row, labelled by $1, \dots, k$
		\item $k$ black vertices on the bottom row, labelled by $k+1, \dots, 2k$, and
		\item the edges between the vertices correspond to the connected components of the set partition $\pi$ that indexes the diagram.
	\end{enumerate}
\end{definition}

Consequently, we have that 
\begin{definition} \label{partalgebra}
	The \textbf{partition algebra} $P_k^k(n)$ is 
	the $\mathbb{R}$-linear span of the set of diagrams
	$d_\pi$ indexed by all of the set partitions $\pi$ of $[2k]$ 
	(together with an algebra product that we omit for brevity).
\end{definition}

Similar to Schur's 1927 version,
Schur--Weyl duality 
between
the symmetric group $S_n$ and
the partition algebra $P_k^k(n)$
describes a one-to-one correspondence between the
irreducibles of the symmetric group $S_n$
and the irreducibles of the partition algebra $P_k^k(n)$
that appear in the decomposition of 
the tensor power space
$(\mathbb{R}^{n})^{\otimes k}$, namely
\begin{equation} \label{PartSWduality}
	(\mathbb{R}^{n})^{\otimes k}
	\cong \bigoplus_{\lambda \in \Lambda(n)} S_n^{\lambda} \otimes Z_{k,n}^{\lambda}
\end{equation}
Here, $\Lambda(n)$ is the set of all integer partitions of $n$, 
$S_n^{\lambda}$ is an irreducible of the symmetric group $S_n$ and
$Z_{k,n}^{\lambda}$ is an irreducible of the partition algebra
$P_k^k(n)$.

This Schur--Weyl duality was obtained by \citet{Jones}
through a surjective map
from the partition algebra
$P_k^k(n)$ onto $\End_{S_n}((\mathbb{R}^{n})^{\otimes k})$.
We describe this map in what follows as we adapt it, and
hence Schur--Weyl duality,
to characterise all of the possible weight matrices that
can appear in permutation equivariant neural networks where the layers are some 
tensor power of $\mathbb{R}^n$.



%


\section{Characterisation of Permutation Equivariant Linear Layer Functions} \label{charPerm}


Many permutation equivariant neural networks are constructed by alternately composing
linear and non-linear equivariant functions between layer spaces that are a 
tensor power of $\mathbb{R}^{n}$ \cite{lim}. 
These layer spaces are representations of the symmetric group $S_n$ in the following sense.

Recall that 
$\mathbb{R}^{n}$ is a representation of $S_n$, called the permutation representation, via its action on the standard basis $\{e_a \mid a \in [n]\}$ which is extended linearly.
Specifically, the action is given by
\begin{equation}
	\sigma \cdot e_a = e_{\sigma(a)} \text{ for all } \sigma \in S_n \text{ and } a \in [n]
\end{equation}
Consequently, for any positive integer $k$, 
the $k$-tensor power of the permutation representation, 
$(\mathbb{R}^{n})^{\otimes k}$, 
is a representation of $S_n$
since the elements 
\begin{equation} \label{tensorelementfirst}
	e_I \coloneqq e_{i_1} \otimes e_{i_2} \otimes \dots \otimes e_{i_k} 
\end{equation}
for all $I \coloneqq (i_1, i_2, \dots, i_k) \in [n]^k$ form the standard basis of 
$(\mathbb{R}^{n})^{\otimes k}$, 
and the action of $S_n$ that maps a basis element of 
$(\mathbb{R}^{n})^{\otimes k}$
of the form (\ref{tensorelementfirst}) to
\begin{equation}
	e_{\sigma(I)} \coloneqq e_{\sigma(i_1)} \otimes e_{\sigma(i_2)} \otimes \dots \otimes e_{\sigma(i_k)} 
\end{equation}
can be extended linearly. 
We denote the representation itself by $\rho_{k}$.

Moreover, a permutation equivariant function between two tensor power spaces is defined as follows.
\begin{definition} \label{equivariance}
	A map $\phi : 
	(\mathbb{R}^{n})^{\otimes k} \rightarrow
	(\mathbb{R}^{n})^{\otimes l}$
	is said to be \textbf{permutation equivariant} if,
	for all $\sigma \in S_n$ and $v \in 
	(\mathbb{R}^{n})^{\otimes k}$,
	\begin{equation} \label{equivmapdefn}
		\phi(\rho_{k}(\sigma)[v]) = \rho_{l}(\sigma)[\phi(v)]
	\end{equation}
	We denote the set of all \textit{linear} permutation equivariant maps 
	between $(\mathbb{R}^{n})^{\otimes k}$ and $(\mathbb{R}^{n})^{\otimes l}$ by
	\begin{equation} \label{desiredhomspace}
		\Hom_{S_n}((\mathbb{R}^{n})^{\otimes k},(\mathbb{R}^{n})^{\otimes l})
	\end{equation}
	It can be shown that 
	(\ref{desiredhomspace})
	is a vector space over $\mathbb{R}$. 
	See \citet{segal} for more details.
	Note that 
	(\ref{desiredhomspace})
	is a subspace of 
	$\Hom((\mathbb{R}^{n})^{\otimes k}, (\mathbb{R}^{n})^{\otimes l})$, 
	the vector space of all linear maps from 
	$(\mathbb{R}^{n})^{\otimes k}$ 
	to $(\mathbb{R}^{n})^{\otimes l}$.
\end{definition}

Our goal is 
to calculate all of the weight matrices
that can appear between any two layers of the permutation equivariant neural networks in question.
It is enough to 
construct a basis of matrices for 
$\Hom_{S_n}((\mathbb{R}^{n})^{\otimes k}, (\mathbb{R}^{n})^{\otimes l})$,
by viewing it as a subspace of 
$\Hom((\mathbb{R}^{n})^{\otimes k}, (\mathbb{R}^{n})^{\otimes l})$
and choosing the standard basis of
$\mathbb{R}^{n}$,
since any weight matrix will be a 
weighted linear combination of these basis matrices.

To construct such a basis, we begin by introducing the following vector spaces 
that are adapted from 
	the definition of
the partition algebra that
appeared in Section \ref{schurweyl}.


\subsection{The Partition Vector Space, $P_k^l(n)$} \label{partitionalgebra}


Instead of considering the set $[2k]$, we now look at
the set $[l+k]$.
As before, we can create a set partition of $[l+k]$ by partitioning it 
into a number of disjoint subsets,
which we also call blocks.
Let $\Pi_{l+k}$ be the set of all set partitions of $[l+k]$.
It will also be useful to define the set $\Pi_{l+k, n}$, which is the subset of $\Pi_{l+k}$ consisting of all set partitions of $[l+k]$ 
having at most $n$ blocks.

As the number of set partitions in $\Pi_{l+k}$ having exactly $t$ blocks is the Stirling number
$
\begin{Bsmallmatrix}
l+k\\
t 
\end{Bsmallmatrix}
$
of the second kind, we see that the number of elements in $\Pi_{l+k}$ is equal to $\Bell(l+k)$, the $(l+k)^{\text{th}}$ Bell number, and that the number of elements in $\Pi_{l+k,n}$ is therefore equal to
\begin{equation}
	\sum_{t=1}^{n} 
		\begin{Bmatrix}
		l+k\\
		t 
		\end{Bmatrix}
	= \Bell(l+k,n)
\end{equation}
the $n$-restricted $(l+k)^{\text{th}}$ Bell number.

\begin{example} \label{setpartexample}
	If $l = 4$ and $k = 5$, then
	\begin{equation} \label{setpartex1}
		\pi := \{1, 2, 5, 7 \mid 3, 4, 8 \mid 6 \mid 9\}
	\end{equation}
	is a set partition in $\Pi_{4+5}$ with $4$ blocks. 
	Hence $\pi \in \Pi_{4+5,n}$ for all $n \geq 4$.
\end{example}

Similar to the partition algebra, we
can form a vector space from the $\mathbb{R}$-linear span of a set of diagrams $d_\pi$, except this time they are indexed by the elements $\pi$ of $\Pi_{l+k}$.
Each diagram $d_\pi$ in the set has two rows of vertices and edges between vertices, except now
there are
\begin{enumerate}
	\item $l$ black vertices on the top row, labelled by $1, \dots, l$
	\item $k$ black vertices on the bottom row, labelled by $l+1, \dots, l+k$, and
	\item the edges between the vertices correspond to the connected components of the set partition $\pi$ that indexes the diagram.
\end{enumerate}
As a result, $d_{\pi}$ represents the equivalence class of all diagrams with connected components equal to the blocks of $\pi$. 
We call this vector space the \textbf{partition vector space}, and denote it by
$P_k^l(n)$. By construction, it has dimension $\Bell(l+k)$.
We call the basis described here the diagram basis.

\begin{example} 
	Continuing on from Example \ref{setpartexample},
	we see that the diagram $d_{\pi}$ 
	corresponding to the set partition $\pi$ 
	given in (\ref{setpartex1}) is
	\begin{equation}	
		\begin{aligned}
			\scalebox{0.5}{\tikzfig{partklelement1ii}}
		\end{aligned}
	\end{equation}
\end{example}

\begin{remark}
	It is clear that if we set $l = k$, then we obtain the partition algebra $P_k^k(n)$
	that was given in Definition \ref{partalgebra}.
\end{remark}



\subsection{The Orbit Basis of $P_k^l(n)$}

We can construct another basis of $P_k^l(n)$ that we will use in what follows
to obtain the basis of matrices for 
$\Hom_{S_n}((\mathbb{R}^{n})^{\otimes k}, (\mathbb{R}^{n})^{\otimes l})$.

First, we define a partial ordering on the set partitions in $\Pi_{l+k}$,
denoted by $\preceq$, which states that, for all $\pi_1$, $\pi_2 \in \Pi_{l+k}$, 
	$\pi_1 \preceq \pi_2$
	if every block of $\pi_1$ is contained in a block of $\pi_2$.
	
Then we can define a set of elements in $P_k^l(n)$ indexed by the set partitions of $\Pi_{l+k}$, 
$B_O \coloneqq \{x_\pi \mid \pi \in \Pi_{l+k}\}$, 
with respect to the diagram basis as
\begin{equation} \label{orbitbasis}
	d_\pi = \sum_{\pi \preceq \theta} x_\theta
\end{equation}
To see why the set $B_O$ forms a basis of $P_k^l(n)$, first,
we form an ordered set of set partitions of $\Pi_{l+k}$ by ordering the set partitions by the number of blocks that they have from smallest to largest, with any arbitrary ordering allowed for a pair of set partitions that have the same number of blocks. 
Call this set $S_{{l+k}}$.
Then, because the square matrix that maps elements of the diagram basis to linear combinations of the set $B_O$ --- whose rows and columns are indexed (in order) by the ordered set $S_{{l+k}}$ --- is unitriangular by (\ref{orbitbasis}), it is therefore invertible, and so we get that
$B_O$ forms a basis of $P_k^l(n)$.
We call $B_O$ the orbit basis of $P_k^l(n)$.


For each set partition $\pi \in \Pi_{l+k}$, we represent its corresponding orbit basis element $x_\pi$
as a diagram in the same way as $d_\pi$, except we use white vertices in each row of the diagram instead.


\begin{example} \label{exP11n}
	The orbit basis of $P_1^1(n)$ consists of the two elements
	\begin{equation} \label{orbitbasis11}
		\begin{aligned}
			\scalebox{0.7}{\tikzfig{orbitbasis11}}
		\end{aligned}
	\end{equation}
	Hence, any element of $P_1^1(n)$ can be expressed as
	\begin{equation} \label{exP11nlincomb}
		\begin{aligned}
			\scalebox{0.7}{\tikzfig{orbitbasis11lincomb}}
		\end{aligned}
	\end{equation}
	for scalars $\lambda_1, \lambda_2 \in \mathbb{R}$.
\end{example}


For more details on the orbit basis, specifically for how to express an orbit basis element as a linear combination of diagram basis elements, see \citet{BenHal1, BenHal2}. 


\subsection{$P_k^l(n)$ and
	a Basis of 
	$\Hom_{S_n}((\mathbb{R}^{n})^{\otimes k}, (\mathbb{R}^{n})^{\otimes l})$}
\label{jonesargument}



In this section, we show how the weight matrices that appear in the permutation equivariant 
neural networks in question are related to the partition vector space $P_k^l(n)$, 
namely by establishing a bijective correspondence between a basis of matrices 
for the vector space of $S_n$-equivariant linear maps from 
$(\mathbb{R}^{n})^{\otimes k}$
to
$(\mathbb{R}^{n})^{\otimes l}$,
expressed in the standard basis of $\mathbb{R}^n$,
and certain orbit basis diagrams that appear in $P_k^l(n)$.

We begin by establishing the following bijective correspondence.

\begin{figure*}[tb]
	\begin{tcolorbox}[colback=teal!10, colframe=teal!30, coltitle=black, 
		title={\bfseries Procedure 1: How to Calculate
		the Weight Matrix of
		an $S_n$-Equivariant Linear Layer Function from 
	$(\mathbb{R}^{n})^{\otimes k}$
	to 
	$(\mathbb{R}^{n})^{\otimes l}$.},
	fonttitle=\bfseries]
		Perform the following steps:
	\begin{enumerate}
		\item Calculate all of the set partitions $\pi$ of $\{1, \dots, l+k\}$ that have
			at most $n$ blocks. 
	\item Express each set partition $\pi$ as an orbit basis diagram $x_\pi$ in $P_k^l(n)$. 
	\item Apply the function $\Phi_{k,n}^l$ to 
		each orbit basis diagram $x_\pi$ to obtain its associated basis matrix $X_\pi$.
	\item Attach a weight $\lambda_\pi \in \mathbb{R}$ to each matrix $X_\pi$.
	\item Finally, calculate $\sum \lambda_\pi X_\pi$ to give the overall weight matrix.
	\end{enumerate}
	Consequently, all of the orbit basis diagrams in $P_k^l(n)$ having at most $n$ blocks
	determine the weight matrix of 
	an $S_n$-equivariant linear layer function from 
	$(\mathbb{R}^{n})^{\otimes k}$
	to 
	$(\mathbb{R}^{n})^{\otimes l}$.
	\end{tcolorbox}
  	\label{summaryprocedure}
\end{figure*}


\begin{proposition} \label{basisorbits}
	The basis elements of 
	$\Hom_{S_n}((\mathbb{R}^{n})^{\otimes k}, (\mathbb{R}^{n})^{\otimes l})$
	are in bijective correspondence with the orbits coming from the action of 
	$S_n$ on the $(l+k)-$fold Cartesian product set $[n]^{l+k}$.
\end{proposition}

\begin{proof}
	As a result of choosing the standard basis for each copy of 
	$\mathbb{R}^{n}$
	that appears in the vector space of all linear maps from
	$(\mathbb{R}^{n})^{\otimes k}$
	to 
	$(\mathbb{R}^{n})^{\otimes l}$,
	this vector space
	has a standard basis of matrix units
	\begin{equation}
		\{E_{I,J}\}_{I \in [n]^l, J \in [n]^k}
	\end{equation}
	where $E_{I,J}$ has a $1$ in the $(I,J)$ position and is $0$ elsewhere.

	Hence, for any standard basis element $e_P \in (\mathbb{R}^{n})^{\otimes k}$,
	we see that
	\begin{equation}
		E_{I,J}e_P = \delta_{J,P}e_I
	\end{equation}
	and so, for any linear map $f: (\mathbb{R}^{n})^{\otimes k} \rightarrow (\mathbb{R}^{n})^{\otimes l}$,
	expressing $f$ in the basis of matrix units as
	\begin{equation}
		f = \sum_{I \in [n]^l}\sum_{J \in [n]^k} f_{I,J}E_{I,J}
	\end{equation}
	we get that
	\begin{equation}
		f(e_P) = \sum_{I \in [n]^l} f_{I,P}e_I
	\end{equation}
	Consequently, given that 
	$\Hom_{S_n}((\mathbb{R}^{n})^{\otimes k}, (\mathbb{R}^{n})^{\otimes l})$
	is a subspace of
	$\Hom((\mathbb{R}^{n})^{\otimes k}, (\mathbb{R}^{n})^{\otimes l})$,
	we have that
	$f$ is an $S_n$-equivariant linear map
	if and only if, 
	for all $\sigma \in S_n$ and standard basis vectors $e_J \in (\mathbb{R}^{n})^{\otimes k}$,
	\begin{equation} \label{equivarg}
		f(\rho_{k}(\sigma)[e_J]) = \rho_{l}(\sigma)[f(e_J)] 
	\end{equation}
	(\ref{equivarg}) holds
	if and only if
	\begin{equation} \label{centeq2}
		\sum_{I \in [n]^l} f_{I,\sigma(J)}e_I = \sum_{I \in [n]^l} f_{I,J}e_{\sigma(I)} 
	\end{equation}
	which is true if and only if
	\begin{equation} \label{centeq1}
		f_{\sigma(I),\sigma(J)} = f_{I,J} 
	\end{equation}
	for all $\sigma \in S_n$, $I \in [n]^{l}$ and $J \in [n]^{k}$.


	Therefore, concatenating the pair $I \in [n]^{l}, J \in [n]^{k}$ into a single element $(I, J) \in [n]^{l+k}$, (\ref{centeq1}) tells us that the basis elements of 
	$\Hom_{S_n}((\mathbb{R}^{n})^{\otimes k}, (\mathbb{R}^{n})^{\otimes l})$
	are in bijective correspondence with the orbits coming from the action of $S_n$ on $[n]^{l+k}$, where $\sigma \in S_n$ acts on the pair $(I,J)$ by 
	\begin{equation} \label{sigmapairaction}
		\sigma(I,J) \coloneqq (\sigma(I),\sigma(J))
	\end{equation}
	However, since $S_n$ acts on $[n]$ transitively, 
	we get that the action of
	$S_n$ on $[n]^{l+k}$ 
	gives a set of orbits that completely
	partition the set $[n]^{l+k}$.
\end{proof}

We now show how the orbits relate to the partition vector space $P_k^l(n)$.

\begin{proposition} \label{orbitsdiags}
	The orbits that come from the action of $S_n$ on $[n]^{l+k}$ 
	are in bijective correspondence with the orbit basis diagrams $x_\pi$
	of $P_k^l(n)$ that have at most $n$ blocks.
\end{proposition}

\begin{proof}
	Consider an orbit coming from the action of $S_n$ on $[n]^{l+k}$.
	We can define the bijection in question 
	on a class representative $(I,J)$ of the orbit as follows.

	Replacing momentarily the elements of $J$ by $i_{l+p} \coloneqq j_p$ for all $p \in [k]$,
	so that
	\begin{align} \label{pairedblocks}
		(I,J) & = (i_1, i_2, \dots, i_l, j_1, j_2, \dots, j_k) \\
		& = (i_1, i_2, \dots, i_l, i_{l+1}, i_{l+2}, \dots i_{l+k})
	\end{align}
	then, for indices $x$, $y \in [l+k]$, we define the bijection by
	\begin{equation} \label{blockbij}
		i_x = i_y \iff x, y \text{ are in the same block of } \pi	
	\end{equation}
	We see that the LHS of (\ref{blockbij}) is checking for an equality on the elements of $[n]$, whereas the RHS is separating the elements of $[l+k]$ into blocks, hence there must be at most $n$ such blocks.

	Moreover, the bijection given in (\ref{blockbij}) is independent of the choice of class representative, since
	\begin{equation} \label{bijindep}
		i_x = i_y \iff \sigma(i_x) = \sigma(i_y) \text{ for all } \sigma \in S_n
	\end{equation}
	This gives us the desired result.
\end{proof}

Combining Propositions \ref{basisorbits} and
\ref{orbitsdiags}, we obtain 
the following key result.

\begin{theorem} \label{thmrestBell}
	For all non-negative integers $l, k$ and positive integers $n$, 
	the basis elements of 
	$\Hom_{S_n}((\mathbb{R}^{n})^{\otimes k}, (\mathbb{R}^{n})^{\otimes l})$
	are in bijective correspondence with
	the orbit basis diagrams $x_\pi$ in $P_k^l(n)$ 
	having at most $n$ blocks, and so
	\begin{equation}
		\dim 
		\Hom_{S_n}((\mathbb{R}^{n})^{\otimes k}, (\mathbb{R}^{n})^{\otimes l})
		=
		\Bell(l+k, n)
	\end{equation}
	where $\Bell(l+k,n)$ is the $n$-restricted $(l+k)^{\text{th}}$ Bell number.
\end{theorem}

\begin{example}
	Suppose that $l = k = 1$, and let $n = 4$.
	It is clear from (\ref{sigmapairaction}) that
	the action of $S_4$ on $[4]^{1+1}$ partitions the set
	into precisely two orbits.
	From (\ref{bijindep}), it is sufficient to choose
	$(1,1)$ to be the class representative of the first orbit 
	and $(1,2)$ to be the class representative of the second orbit.
	(\ref{blockbij}) tells us that the set partition corresponding to the first orbit
	must be $\pi_1 \coloneqq \{1, 2\}$
	whereas the set partition corresponding to the second orbit
	must be $\pi_2 \coloneqq \{1 \mid 2\}$.
	The orbit basis diagrams that correspond to $\pi_1$ and $\pi_2$ 
	first appeared in Example \ref{exP11n}.
	By Theorem \ref{thmrestBell}, the basis matrices
	for the space of $S_4$-equivariant linear maps from 
	$\mathbb{R}^{4}$
	to 
	$\mathbb{R}^{4}$
	correspond bijectively with these orbit basis diagrams, hence there are two of them.
	We show how to calculate the basis matrices in Example
	\ref{114example}.
\end{example}

\subsection{Permutation Equivariant Weight Matrices} \label{permequivmat}



We can go further than Theorem \ref{thmrestBell} and obtain the basis matrices of 
$\Hom_{S_n}((\mathbb{R}^{n})^{\otimes k}, (\mathbb{R}^{n})^{\otimes l})$ themselves
from the orbit basis diagrams in $P_k^l(n)$ having at most $n$ blocks.
In doing so, we show how to construct all of the weight matrices
that can appear between any two tensor power layers of the 
permutation equivariant neural networks in question.



\begin{figure*}[tb]
	\begin{tcolorbox}[colback=blue!10, colframe=blue!30, coltitle=black, 
		title={\bfseries Procedure 2: How to Calculate
		the $(I,J)$-entry of each Permutation Equivariant Basis Matrix $X_\pi$
		from 
		$(\mathbb{R}^{n})^{\otimes k}$
		to 
		$(\mathbb{R}^{n})^{\otimes l}$.},
		fonttitle=\bfseries]
		We assume that $x_\pi$ is an orbit basis diagram in $P_k^l(n)$ having at most $n$ blocks.
		We perform the following steps:
	\begin{enumerate}
	\item Place the indices $I$ on the top row of $x_\pi$ and the indices $J$ on the bottom row of $x_\pi$.
	\item If all of the vertices in each block in $x_\pi$ have been overlaid with the same number, and no two blocks have had their vertices overlaid with the same number, then the $(I, J)$ entry of $X_\pi$ is $1$, otherwise it is $0$.
	\end{enumerate}
	\end{tcolorbox}
  	\label{powermethod}
\end{figure*}



To obtain the basis matrices, we first need to define a procedure for labelling
the blocks of an orbit basis diagram $x_\pi$ in $P_k^l(n)$ having at most $n$ blocks.


\begin{definition}
	Let $x_\pi$ be an orbit basis diagram in $P_k^l(n)$ having at most $n$ blocks.
	Denote the number of blocks in $x_\pi$ by $t$.

	We obtain a \textbf{block labelling} for $x_\pi$
	by letting $B_1$ be the block that contains 
	the number $1 \in [l+k]$, and iteratively letting $B_j$, for $1 < j \leq t$, be the block 
	that contains the smallest number in $[l+k]$ that is not 
	in $B_1 \cup B_2 \cup \dots \cup B_{j-1}$.

	We can represent the block labelling for $x_\pi$ in two equivalent forms. 
	The first form is an $(l+k)-$length tuple $(I_\pi, J_\pi$) with elements in $[n]$, 
	where the length of $I_\pi$ is $l$,
	the length of $J_\pi$ is $k$,
	and the $p^{\text{th}}$ entry is the label of the block that contains vertex $p$. 
	The second form is a diagram which is obtained by relabelling each vertex in 
	the orbit basis diagram $x_\pi$ with the label of the block containing that vertex. 
	We will see that this particular form is very useful in what follows
	as it highlights the structure of the blocks and their labels in the block labelling for $x_\pi$.

\end{definition}

\begin{example} \label{blocklabelex}
	Suppose that we have the orbit basis diagram $x_\pi$
	\begin{equation} \label{blocklabellingvert}
		\begin{aligned}
			\scalebox{0.5}{\tikzfig{blocklabellingvertex}}
		\end{aligned}
	\end{equation}
	corresponding to the set partition
	\begin{equation} \label{setpartexblock}
		\pi = \{1, 3 \mid 2, 4 \mid 5 \mid 7 \mid 6, 8\}
	\end{equation}
	Here, $l = 2$ and $k = 6$. Suppose that $n = 5$. 
	Then the blocks of $x_\pi$ are labelled, in left-to-right order, 
	as $B_1, B_2, B_3, B_5, B_4$.
	Hence, the block labelling for $x_\pi$ is
	\begin{equation}
	(I_{\pi}, J_{\pi}) = (1, 2, 1, 2, 3, 4, 5, 4)
	\end{equation}
	an element of $[5]^{2+6}$, or, in diagram form,
	\begin{equation} \label{blocklabeldiag}
		\begin{aligned}
			\scalebox{0.5}{\tikzfig{blocklabelling}}
		\end{aligned}
	\end{equation}
	We see how the blocks and their labels have been made clear by using the diagram form
	of the block labelling for $x_\pi$.
\end{example}


The diagram form of the block labelling is very nice for another reason: 
we can easily construct
a matrix unit 
in
$\Hom((\mathbb{R}^{n})^{\otimes k}, (\mathbb{R}^{n})^{\otimes l})$
from it.
This matrix unit is simply
$E_{I_\pi, J_\pi}$, where 
$I_{\pi}$ is the top row of the diagram form of the block labelling
and $J_{\pi}$ is the bottom row of the diagram form of the block labelling.


Moreover, by acting on the block labelling $(I_{\pi}, J_{\pi})$
with $S_n$, we obtain an orbit 
for the $S_n$ action on $[n]^{l+k}$ with $(I_{\pi}, J_{\pi})$ as the class representative.
Denote this orbit by $O((I_{\pi},J_{\pi}))$.
The beauty of the diagram form for the block labelling 
is that it shows explicitly how all of the elements $(I, J)$ in this orbit are precisely
all of the possible labellings of the blocks of $x_\pi$!
Hence, we have that 
\begin{proposition}
$O((I_{\pi},J_{\pi}))$ is equal to
\begin{equation}
	\left\{ (I,J) \in [n]^{l+k} \; \middle\vert 
	\begin{array}{l}
		i_x = i_y \text{ if and only if } x, y \\
		\text{are in the same block of } \pi
	\end{array}
	\right\}
\end{equation}
\end{proposition}

\begin{example}  \label{blocklabelperm}
	Continuing on from Example \ref{blocklabelex},
	the matrix unit that we obtain from (\ref{blocklabeldiag}) is
	$E_{(1,2\mid1,2,3,4,5,4)}$. 
	Moreover,
	we see that 
	\begin{equation} \label{blocklabellingperm}
		\begin{aligned}
			\scalebox{0.5}{\tikzfig{blocklabellingperm}}
		\end{aligned}
	\end{equation}
	is in the orbit of (\ref{blocklabeldiag}) as a result of
	relabelling the blocks of (\ref{blocklabellingvert}), 
	or, more formally, by applying the permutation
	$(12)(345)$ in $S_5$ to the block labels of (\ref{blocklabeldiag}).
	In particular, we obtain the matrix unit
	$E_{(2,1\mid2,1,4,5,3,5)}$ 
	from (\ref{blocklabellingperm}), 
	which is 
	a linear map from
	$(\mathbb{R}^{5})^{\otimes 6}$
	to
	$(\mathbb{R}^{5})^{\otimes 2}$. 
\end{example}


\begin{figure*}[tb]
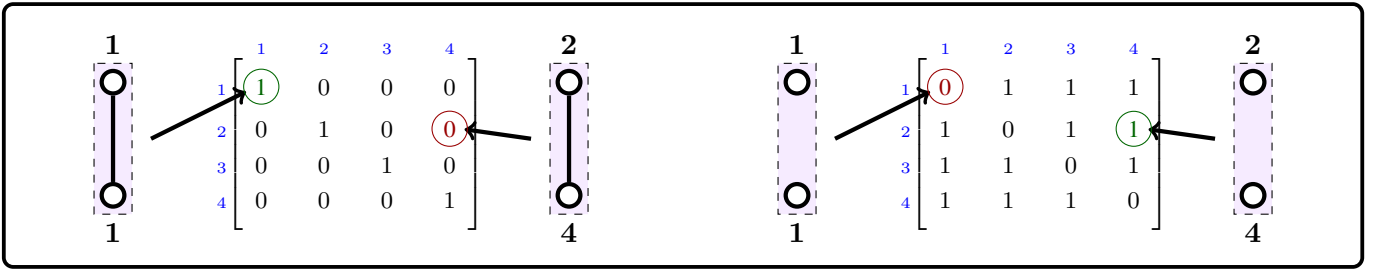

	\begin{tcolorbox}[colback=white!02, colframe=black]
	\begin{center}
		\tikzfig{IJdiagram}
	\end{center}
	\end{tcolorbox}
	\caption{
		We obtain the two basis matrices whose weighted linear combination gives all of the possible
		weight matrices that can appear in an $S_4$-equivariant neural network
		from $\mathbb{R}^4$ to $\mathbb{R}^4$.
		We obtain these matrices from the orbit basis diagrams in $P_1^1(4)$ 
		that have at most $4$ blocks.
		For each orbit basis diagram, to calculate the $(I,J)$-entry of its associated basis matrix,
		we place the $I$-tuple on the top row of the diagram and the $J$-tuple 
		on the bottom row of the diagram and see if they consistently label the diagram's blocks such that no two blocks have the same label.
		If the labelling is consistent, then we put a $1$ in the $(I,J)$-entry of the matrix, otherwise $0$.
		}
  	\label{matrix1,4}
\end{figure*}



The reason for defining the block labelling of an orbit basis diagram $x_\pi$ 
having at most $n$ blocks in $P_k^l(n)$
is that we can use it 
to construct a basis element of 
$\Hom_{S_n}((\mathbb{R}^{n})^{\otimes k}, (\mathbb{R}^{n})^{\otimes l})$ as follows: 
obtaining all of the elements $(I,J)$ that appear in $O((I_{\pi},J_{\pi}))$,
and noting that we can form the matrix unit $E_{I,J}$ from each element,
we can define $X_\pi$ to be
\begin{equation} \label{equivbasiselement}
	X_\pi \coloneqq \sum_{(I,J) \in O((I_{\pi},J_{\pi}))} E_{I,J}
\end{equation}
We see that $X_\pi$ is a basis element of 
$\Hom_{S_n}((\mathbb{R}^{n})^{\otimes k}, (\mathbb{R}^{n})^{\otimes l})$
by (\ref{centeq1}).

Put simply, 
to obtain a basis element $X_\pi$ of
$\Hom_{S_n}((\mathbb{R}^{n})^{\otimes k}, (\mathbb{R}^{n})^{\otimes l})$,
we have added together the matrix units that
come from all of the possible labellings of the blocks of $x_\pi$.

Consequently, we can define the following linear map to make clear
the connection between the partition vector space $P_k^l(n)$ 
and all of the weight matrices that can appear in a permutation equivariant 
neural network between the layers
$(\mathbb{R}^{n})^{\otimes k}$ and $(\mathbb{R}^{n})^{\otimes l}$.

\begin{definition} \label{partmatrixmap}
	For all non-negative integers $l, k$ and positive integers $n$, 
	we can define a surjective map
	\begin{equation} \label{partmatrixmapexp}
		\Phi_{k,n}^l : P_k^l(n) \rightarrow 
		\Hom_{S_n}((\mathbb{R}^{n})^{\otimes k}, (\mathbb{R}^{n})^{\otimes l})
	\end{equation}
	on the orbit basis 
	of $P_k^l(n)$ as follows, and extend linearly:
	\begin{equation} \label{partcentsurj}
		\Phi_{k,n}^l(x_\pi) \coloneqq
		\begin{cases}
			X_\pi & \text{if $\pi$ has $n$ or fewer blocks} \\
			0     & \text{if $\pi$ has more than $n$ blocks}
		\end{cases}
	\end{equation}
\end{definition}
In the case where $k = l$, (\ref{partmatrixmapexp}) is the map that \citet{Jones}
used to obtain Schur--Weyl duality between the symmetric group and the partition algebra.
Hence we have adapted Schur--Weyl duality to characterise the weight matrices
that appear in any permutation equivariant neural network where the layers are some
tensor power of $\mathbb{R}^n$.

We summarise our results with the following two theorems.


\begin{theorem} \label{equivbasis}
	For all non-negative integers $l, k$ and positive integers $n$, 
	we have that
	\begin{equation}
		\{X_\pi \mid \pi \in \Pi_{l+k,n} \}
	\end{equation}
	is a basis of 
	$\Hom_{S_n}((\mathbb{R}^{n})^{\otimes k}, (\mathbb{R}^{n})^{\otimes l})$.
\end{theorem}

\begin{theorem}[Permutation Equivariant Weight Matrices] \label{weightmatclass}
	For all non-negative integers $l, k$ and positive integers $n$, 
	the weight matrix $W$ that appears in
	an $S_n$-equivariant linear layer function
	from $(\mathbb{R}^{n})^{\otimes k}$ to $(\mathbb{R}^{n})^{\otimes l}$
	must be of the form
	\begin{equation}
		W = \sum_{\pi \in \Pi_{l+k,n}} \lambda_\pi{X_\pi}
	\end{equation}
	for $\Bell(l+k,n)$ many weights $\lambda_\pi \in \mathbb{R}$.
\end{theorem}


\subsection{A Note on the Relationship between $n, k$ and $l$}

In classifying the weight matrices that 
can appear in permutation equivariant neural networks,
it is important to note that there is a relationship between $n, k$ and $l$.
In particular, the number of weights in the weight matrix can depend on $n$.

If $n \geq l+k$, we see that the map $\Phi_{k,n}^l$ is an isomorphism of vector spaces.
This is because an orbit basis diagram in $P_k^l(n)$ can have at most $l+k$ blocks,
and so, in this case, there are no orbit basis diagrams with more than $n$ blocks. 
Consequently, the number of weights in the weight matrix does not depend on $n$.

However, if $n < l+k$, then the map $\Phi_{k,n}^l$ is \textit{not} an isomorphism of vector spaces.
Indeed, in this case, the kernel of this map is non-trivial,
of dimension $\Bell(l+k) - \Bell(l+k,n)$,
since it is the $\mathbb{R}$-linear span of the orbit basis diagrams in $P_k^l(n)$
having more than $n$ blocks.
Consequently, the number of weights in the weight matrix \textit{does} depend on $n$.

This improves upon the result that appears in \citet{maron2018}. 
Although this relationship was first mentioned in the Appendix of \citet{finzi}, 
we wish to highlight this point in the main text of our paper
because a number of papers that we have read in the machine learning literature on this topic
assume that the number of weights in the weight matrix is independent of $n$ in all cases.
This becomes more important when $l, k$ are large and $n$ is small, since
the dimension of the kernel becomes very large relative to the actual number of weights
in the weight matrix.
For more information on the kernel of $\Phi_{k,n}^l$, see \citet{BenHal1}.

\subsection{General Procedure and Examples}

In 
Procedure 1,
we provide 
an algorithm
for how to calculate the weight matrix 
that appears in a permutation equivariant neural network
from the layer space
$(\mathbb{R}^{n})^{\otimes k}$
to the layer space
$(\mathbb{R}^{n})^{\otimes l}$
so that our results will be accessible
to the general machine learning practitioner.
In 
Procedure 2,
we describe 
how to calculate the $(I,J)$-entry
of each basis matrix that appears in the overall weight matrix.
This method is powerful because
each $(I,J)$-entry of $X_\pi$ can be calculated simply by placing
the $I$ indices on the top row of the orbit basis diagram $x_\pi$
and the $J$ indices on the bottom row of $x_\pi$ and seeing whether the
blocks of the diagram are consistently and distinctly labelled.

We give a number of examples 
that display the simplicity and power of our method
for calculating any permutation equivariant weight matrix
between tensor power spaces of $\mathbb{R}^{n}$.

\begin{example} \label{114example}
	Suppose that we would like to find 
	the weight matrix for an $S_4$-equivariant linear layer function from
	$\mathbb{R}^{4}$ to $\mathbb{R}^{4}$.
	Note that $l = k = 1$ and $n = 4$.

	To calculate this weight matrix, we follow Procedure 1.
	First, we need to calculate all of the set partitions of $[1+1]$ having at most $4$ blocks.
	These are 
	$\pi_1 = \{1, 2\}$ and 
	$\pi_2 = \{1 \mid 2\}$.
	Next, we express each of these set partitions as an orbit basis diagram in $P_1^1(4)$.
	These diagrams appeared in Example \ref{exP11n}.
	Now we apply the map $\Phi_{1,4}^{1}$ to each of these orbit basis diagrams
	to obtain the basis matrices 
	$X_{\pi_1}$ 
	and 
	$X_{\pi_2}$.
	Figure \ref{matrix1,4}
	shows how to calculate all of the $(I,J)$-entries for both of these matrices using 
	Procedure 2.
	In particular, we see, for example, that the $(1,1)$-entry of $X_{\pi_1}$ is $1$, since the only block in $x_{\pi_1}$ is consistently labelled (by $1$), whereas the $(2,4)$-entry of $X_{\pi_1}$ is $0$, since the only block in $x_{\pi_1}$ is inconsistently labelled ($2 \neq 4$). The procedure is the same for $X_{\pi_2}$. We see that only the diagonal entries of $X_{\pi_2}$ are zero since the two blocks in $x_{\pi_2}$ must be distinctly labelled.

	Finally, we multiply each matrix by a weight, namely $\lambda_1$ and $\lambda_2$, respectively, and then
	add the two matrices together to obtain the overall weight matrix.
	Hence, the weight matrix for an $S_4$-equivariant linear layer function from
$\mathbb{R}^{4}$ to 
$\mathbb{R}^{4}$
is of the form
\begin{equation}
	\NiceMatrixOptions{code-for-first-row = \scriptstyle \color{blue},
                   	   code-for-first-col = \scriptstyle \color{blue}
	}
	\renewcommand{\arraystretch}{1.5}
	\begin{bNiceArray}{*{4}{c}}[first-row,first-col]
				& 1 		& 2	& 3	& 4 \\
		1		& \lambda_1	& \lambda_2	& \lambda_2	& \lambda_2	\\
		2		& \lambda_2	& \lambda_1	& \lambda_2	& \lambda_2	\\
		3		& \lambda_2	& \lambda_2	& \lambda_1	& \lambda_2	\\
		4		& \lambda_2	& \lambda_2	& \lambda_2	& \lambda_1
	\end{bNiceArray}
\end{equation}
for weights $\lambda_1, \lambda_2 \in \mathbb{R}$.


It is not hard to see that the weight matrix for an $S_n$-equivariant linear layer function from
$\mathbb{R}^{n}$ to 
$\mathbb{R}^{n}$
	is an $n \times n$ matrix, with the diagonal entries given by the weight $\lambda_1$ and the off-diagonal entries given by the weight 
$\lambda_2$.
\end{example}


\begin{figure*}[tb]
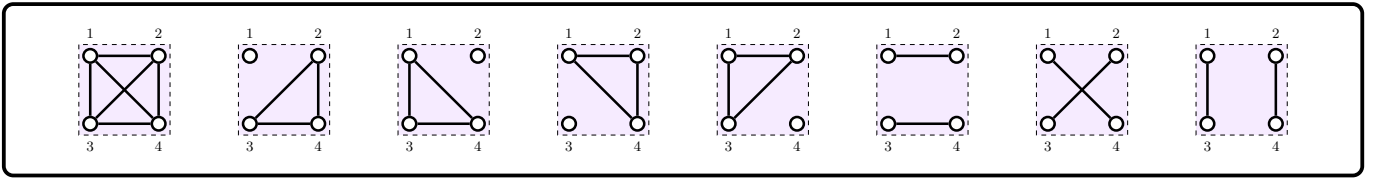

	\begin{tcolorbox}[colback=white!02, colframe=black]
	\begin{center}
		\scalebox{0.6}{\tikzfig{example2,2col}}
	\end{center}
	\end{tcolorbox}
	\caption{We show the eight orbit basis diagrams in $P_2^2(2)$ that have at most $2$ blocks.
	They are needed to calculate the weight matrix
		for an $S_2$-equivariant linear layer function 
		$(\mathbb{R}^{2})^{\otimes 2} \rightarrow (\mathbb{R}^{2})^{\otimes 2}$.
		As the number of orbit basis diagrams in $P_2^2(2)$ is $\Bell(4) = 15$, 
		this example highlights that the number of weights that appear in a permutation equivariant
		weight matrix depends on the relationship between the degree $n$ of the symmetric group $S_n$ and the sum of the tensor power orders $l + k$ that define the layers of the permutation equivariant neural network.
		}
  	\label{matrix2,2}
\end{figure*}


\begin{example}
	Continuing on from Example \ref{blocklabelex},
	we see that 
	the $(1, 2 \mid 1, 2, 3, 4, 5, 4)$-entry of the weight matrix 
	for an $S_5$-equivariant linear layer function
	from
	$(\mathbb{R}^{5})^{\otimes 6}$
	to 
	$(\mathbb{R}^{5})^{\otimes 2}$
	will be $\lambda_{\pi}$, 
	a parameter that corresponds to the set partition $\pi$ given in (\ref{setpartexblock}).

	This is because the diagram given in (\ref{blocklabeldiag}),
	where the orbit basis diagram corresponding to $\pi$ has had the top row 
	overlaid with the indices of $I = (1, 2)$ and the bottom row with 
	$J = (1, 2, 3, 4, 5, 4)$, satisfies condition 2 of 
	Procedure 2,
	namely
	that the indices consistently and distinctly label the blocks of $x_\pi$.

	Moreover, referring back to Example \ref{blocklabelperm}, we see that
	the $(2, 1 \mid 2, 1, 4, 5, 3, 5)$-entry of the same weight matrix 
	will also be $\lambda_{\pi}$, since this is related to $(1, 2 \mid 1, 2, 3, 4, 5, 4)$
	by the permutation $(1 2)(3 4 5)$ in $S_5$.
\end{example}


\begin{example} \label{lessBell}
	We now give an example where the number of weights
	in a permutation equivariant weight matrix
	is not the full Bell number $\Bell(l+k)$.
	Suppose that we would like to find the weight matrix for 
	an $S_2$-equivariant linear layer function from 
	$(\mathbb{R}^{2})^{\otimes 2}$
	to 
	$(\mathbb{R}^{2})^{\otimes 2}$.
	In this case, $l = k = 2$ and $n = 2$.

	To calculate this weight matrix, we again follow Procedure 1.
	We first need to calculate all of the set partitions of $[2+2]$ having at most $2$ blocks.
	There are $\Bell(4,2) = 8$ of them, and they are shown in Figure \ref{matrix2,2}.
	Note, in particular, that there are not $\Bell(4) = 15$ of them, 
	which implies that the map $\Phi_{2,2}^2$ has a kernel. 
	This is what we expected, since $n \ngeq l+k$.

	Next, we apply the map $\Phi_{2,2}^2$ to each of the eight 
	orbit basis diagrams to obtain eight basis matrices $X_{\pi_1}, \dots, X_{\pi_8}$, multiply each matrix $X_{\pi_i}$ by a weight $\lambda_i$,
	and then finally add them all together.


Hence, the weight matrix for an $S_2$-equivariant linear layer function from
$(\mathbb{R}^{2})^{\otimes 2}$ to 
$(\mathbb{R}^{2})^{\otimes 2}$
is of the form

\begin{equation}
	\NiceMatrixOptions{code-for-first-row = \scriptstyle \color{blue},
                   	   code-for-first-col = \scriptstyle \color{blue}
	}
	\renewcommand{\arraystretch}{1.5}
	\begin{bNiceArray}{*{2}{c}|*{2}{c}}[first-row,first-col]
				& 1,1 		& 1,2	& 2,1	& 2,2 \\
		1,1		& \lambda_1	& \lambda_3	& \lambda_2	& \lambda_6	\\
		1,2		& \lambda_5	& \lambda_8	& \lambda_7	& \lambda_4	\\
		\hline
		2,1		& \lambda_4	& \lambda_7	& \lambda_8	& \lambda_5	\\
		2,2		& \lambda_6	& \lambda_2	& \lambda_3	& \lambda_1
	\end{bNiceArray}
\end{equation}
for weights $\lambda_1, \lambda_2, \dots, \lambda_8 \in \mathbb{R}$.
\end{example}

\subsection{Adding Features and Biases} \label{featbiases}

Adding features and biases was first considered by \citet{maron2018}; in the 
Supplementary Material \cite{supp}
we show how the basis matrices with features and biases can be found in terms of orbit basis diagrams 
by adapting the results that appear in Section \ref{permequivmat}.

\subsection{Equivariance to Local Symmetries}
We can extend our results to linear layer functions 
that are equivariant to a direct product of symmetric groups
$S_{n_1} \times \dots \times S_{n_m}$.
These functions model local symmetries in data since each symmetric group $S_{n_r}$
in the direct product captures only the symmetries in its associated subset of $n_r$ objects.
We 
can
use our method to recover the result of \citet{Hartford} and
give an explanation 
in the language of the partition algebras as to why their result holds.
These extensions are discussed 
in the Supplementary Material \cite{supp}.

\subsection{Limitations and Discussion}

It is important to acknowledge that given the
current limitations of hardware, there will be some challenges when
implementing the neural networks that are discussed in this paper.
In particular, significant engineering efforts will be needed to achieve the required scale because storing high-order tensors in memory
is not a straightforward task. 
This was demonstrated by \citet{clebschgordan}, who had to develop custom CUDA kernels in order to implement their tensor product based neural networks. 
Nevertheless, we anticipate that with the increasing availability of computing power, higher-order group equivariant neural networks will become more prevalent in practical applications. 
Notably, while the dimension of tensor power spaces increases exponentially with their order, the dimension of the space of equivariant maps between such tensor power spaces is often much smaller, and the corresponding matrices are typically sparse. 
Therefore, while storing these matrices may present some technical difficulties, it should be feasible with the current computing power that is available.



\subsection{Code}

Together with 
Procedures 1 and 2,
we have provided
a PyTorch implementation of the permutation equivariant weight matrices
for any symmetric group $S_n$ and for all possible tensor power spaces
of $\mathbb{R}^n$ 
in the Supplementary Material \cite{supp}.
This will make it possible for the general machine learning practitioner
to use the layers that we have characterised
in their experiments.




\section{Conclusion} \label{conclusion}

We are the first to show that Schur--Weyl duality 
between the symmetric group and the partition algebra
can be used to fully characterise the permutation equivariant weight matrices 
that appear between neural network layers that are tensor power spaces of $\mathbb{R}^{n}$.
We showed that the weight matrices can be obtained by constructing a basis of matrices
from a vector space of diagrams that is adapted from the partition algebra.
In particular, we proved that each basis matrix can be found from its associated orbit 
basis diagram by
adding together 
all of the matrix units that are indexed by all of the possible labellings of the blocks in the diagram.
In doing so, we have added weight to the idea that 
Schur--Weyl duality is a useful
tool for constructing
group equivariant neural network architectures.



\begin{ack}
	The author would like to thank his PhD supervisor Professor William J. Knottenbelt for being generous with his time throughout the author's period of research prior to the publication of this paper.
	This work was funded by the Doctoral Scholarship for Applied Research which was awarded to the author under Imperial College London's Department of Computing Applied Research scheme.
	This work will form part of the author's PhD thesis at Imperial College London.
\end{ack}




\nocite{*}
\bibliography{mybibfile}

\newpage
\appendix
\onecolumn

\begin{center}
	{\LARGE\bfseries Supplementary Material}
\end{center}


	\section{Basis Matrix Calculations for the Weight Matrix in Example \ref{lessBell}}
	\label{calcnotbell2k}

	In Example \ref{lessBell}, we gave an example where the number of weights in a permutation
	equivariant weight matrix is not the full Bell number $\Bell(l+k)$.
	In particular, we found that the weight matrix for an $S_2$-equivariant linear layer
	function from 
	$(\mathbb{R}^2)^{\otimes 2}$
	to	
	$(\mathbb{R}^2)^{\otimes 2}$
	had $\Bell(4,2) = 8$ weights, not $\Bell(4) = 15$ weights.
	The following table shows how to calculate the eight basis matrices 
	that determine the weight matrix from the orbit basis elements 
	in $P_2^2(2)$ that have at most $2$ blocks.

	\begin{center}
\begin{tblr}{
  colspec = {X[c,h]X[c]X[c]X[c]},
  stretch = 0,
  rowsep = 5pt,
  hlines = {1pt},
  vlines = {1pt},
}
	{Orbit Basis Element \\ $x_\pi$} 	& {Set Partition \\ $\pi$} 	& {Block Labelling \\ $(I_{\pi} \mid J_{\pi})$}	& 
	{Basis Matrix \\ $X_\pi$} \\
	\scalebox{0.6}{\tikzfig{orbit2nsq1}} & $\{1, 2, 3, 4\}$ & $(1, 1 \mid 1, 1)$ & 
	\scalebox{0.75}{
	$
	\NiceMatrixOptions{code-for-first-row = \scriptstyle \color{blue},
                   	   code-for-first-col = \scriptstyle \color{blue}
	}
	\begin{bNiceArray}{*{2}{c} | *{2}{c}}[first-row,first-col]
				& 1,1 		& 1,2	& 2,1	& 2,2 \\
		1,1		& 1	& 0	& 0	& 0	\\
		1,2		& 0	& 0	& 0	& 0	\\
		\cline{1-4}
		2,1		& 0	& 0	& 0	& 0	\\
		2,2		& 0	& 0	& 0	& 1
	\end{bNiceArray}
	$}
	\\
	\scalebox{0.6}{\tikzfig{orbit2nsq2}}	& $\{1, 2, 4 \mid 3\}$ 	& $(1, 1 \mid 2, 1)$ 	& 
	\scalebox{0.75}{
	$
	\NiceMatrixOptions{code-for-first-row = \scriptstyle \color{blue},
                   	   code-for-first-col = \scriptstyle \color{blue}
	}
	\begin{bNiceArray}{*{2}{c} | *{2}{c}}[first-row,first-col]
				& 1,1 		& 1,2	& 2,1	& 2,2 \\
		1,1		& 0	& 0	& 1	& 0	\\
		1,2		& 0	& 0	& 0	& 0	\\
		\cline{1-4}
		2,1		& 0	& 0	& 0	& 0	\\
		2,2		& 0	& 1	& 0	& 0
	\end{bNiceArray}
	$}
	\\
	\scalebox{0.6}{\tikzfig{orbit2nsq3}}	& $\{1, 2, 3 \mid 4\}$ 	& $(1, 1 \mid 1, 2)$ 	& 
	\scalebox{0.75}{
	$
	\NiceMatrixOptions{code-for-first-row = \scriptstyle \color{blue},
                   	   code-for-first-col = \scriptstyle \color{blue}
	}
	\begin{bNiceArray}{*{2}{c} | *{2}{c}}[first-row,first-col]
				& 1,1 		& 1,2	& 2,1	& 2,2 \\
		1,1		& 0	& 1	& 0	& 0	\\
		1,2		& 0	& 0	& 0	& 0	\\
		\cline{1-4}
		2,1		& 0	& 0	& 0	& 0	\\
		2,2		& 0	& 0	& 1	& 0
	\end{bNiceArray}
	$}
	\\
	\scalebox{0.6}{\tikzfig{orbit2nsq4}}	& $\{1 \mid 2, 3, 4\}$ 	& $(1, 2 \mid 2, 2)$ 	& 
	\scalebox{0.75}{
	$
	\NiceMatrixOptions{code-for-first-row = \scriptstyle \color{blue},
                   	   code-for-first-col = \scriptstyle \color{blue}
	}
	\begin{bNiceArray}{*{2}{c} | *{2}{c}}[first-row,first-col]
				& 1,1 		& 1,2	& 2,1	& 2,2 \\
		1,1		& 0	& 0	& 0	& 0	\\
		1,2		& 0	& 0	& 0	& 1	\\
		\cline{1-4}
		2,1		& 1	& 0	& 0	& 0	\\
		2,2		& 0	& 0	& 0	& 0
	\end{bNiceArray}
	$}
	\\
	\scalebox{0.6}{\tikzfig{orbit2nsq5}}	& $\{1, 3, 4 \mid 2\}$ 	& $(1, 2 \mid 1, 1)$ 	& 
	\scalebox{0.75}{
	$
	\NiceMatrixOptions{code-for-first-row = \scriptstyle \color{blue},
                   	   code-for-first-col = \scriptstyle \color{blue}
	}
	\begin{bNiceArray}{*{2}{c} | *{2}{c}}[first-row,first-col]
				& 1,1 		& 1,2	& 2,1	& 2,2 \\
		1,1		& 0	& 0	& 0	& 0	\\
		1,2		& 1	& 0	& 0	& 0	\\
		\cline{1-4}
		2,1		& 0	& 0	& 0	& 1	\\
		2,2		& 0	& 0	& 0	& 0
	\end{bNiceArray}
	$}
	\\
	\scalebox{0.6}{\tikzfig{orbit2nsq6}}	& $\{1, 2 \mid 3, 4\}$ 	& $(1, 1 \mid 2, 2)$ 	& 
	\scalebox{0.75}{
	$
	\NiceMatrixOptions{code-for-first-row = \scriptstyle \color{blue},
                   	   code-for-first-col = \scriptstyle \color{blue}
	}
	\begin{bNiceArray}{*{2}{c} | *{2}{c}}[first-row,first-col]
				& 1,1 		& 1,2	& 2,1	& 2,2 \\
		1,1		& 0	& 0	& 0	& 1	\\
		1,2		& 0	& 0	& 0	& 0	\\
		\cline{1-4}
		2,1		& 0	& 0	& 0	& 0	\\
		2,2		& 1	& 0	& 0	& 0
	\end{bNiceArray}
	$}
	\\
	\scalebox{0.6}{\tikzfig{orbit2nsq7}}	& $\{1, 4 \mid 2, 3\}$ 	& $(1, 2 \mid 2, 1)$ 	& 
	\scalebox{0.75}{
	$
	\NiceMatrixOptions{code-for-first-row = \scriptstyle \color{blue},
                   	   code-for-first-col = \scriptstyle \color{blue}
	}
	\begin{bNiceArray}{*{2}{c} | *{2}{c}}[first-row,first-col]
				& 1,1 		& 1,2	& 2,1	& 2,2 \\
		1,1		& 0	& 0	& 0	& 0	\\
		1,2		& 0	& 0	& 1	& 0	\\
		\cline{1-4}
		2,1		& 0	& 1	& 0	& 0	\\
		2,2		& 0	& 0	& 0	& 0
	\end{bNiceArray}
	$}
	\\
	\scalebox{0.6}{\tikzfig{orbit2nsq8}}	& $\{1, 3 \mid 2, 4\}$ 	& $(1, 2 \mid 1, 2)$ 	& 
	\scalebox{0.75}{
	$
	\NiceMatrixOptions{code-for-first-row = \scriptstyle \color{blue},
                   	   code-for-first-col = \scriptstyle \color{blue}
	}
	\begin{bNiceArray}{*{2}{c} | *{2}{c}}[first-row,first-col]
				& 1,1 		& 1,2	& 2,1	& 2,2 \\
		1,1		& 0	& 0	& 0	& 0	\\
		1,2		& 0	& 1	& 0	& 0	\\
		\cline{1-4}
		2,1		& 0	& 0	& 1	& 0	\\
		2,2		& 0	& 0	& 0	& 0
	\end{bNiceArray}
	$}
	\\
\end{tblr}
  	\label{matrix2,2old}
	\end{center}


\section{Adding Features and Biases} \label{appfeatbiases}

\subsection{Features} 

All of the results that appeared in Section \ref{charPerm}
can be adapted for the case where the feature dimension of the layer spaces is greater than one.
If the feature dimension of a layer space is now $d$, then
instead of the space being
$(\mathbb{R}^{n})^{\otimes k}$, 
it is now 
$(\mathbb{R}^{n})^{\otimes k} \otimes \mathbb{R}^d$.
Consequently, we can adapt Theorem \ref{equivbasis}
to obtain the following result.
\begin{theorem} \label{equivbasislpluskwithfeatures}
	For all non-negative integers $l, k$ and positive integers $n, d_k$ and $d_l$,
	we have that
	\begin{equation}
		\{X_{\pi,i,j} \mid \pi \in \Pi_{l+k,n}, i \in [d_l], j \in [d_k] \}
	\end{equation}
	is a basis of 
	$\Hom_{S_n}(
	(\mathbb{R}^{n})^{\otimes k} \otimes \mathbb{R}^{d_k},
	(\mathbb{R}^{n})^{\otimes l} \otimes \mathbb{R}^{d_l}
	)$,
	where now
	\begin{equation} \label{equivbasiselementlpluskfeatures}
		X_{\pi,i,j} \coloneqq \sum_{(I,J) \in O((I_{\pi},J_{\pi}))} E_{I,i,J,j}
	\end{equation}
	Here, $E_{I,i,J,j}$ is a matrix unit in 
	$\Hom(
	(\mathbb{R}^{n})^{\otimes k} \otimes \mathbb{R}^{d_k},
	(\mathbb{R}^{n})^{\otimes l} \otimes \mathbb{R}^{d_l}
	)$,
	for all tuples
	$I \in [n]^l$ and $J \in [n]^k$, 
	and for all $i \in [d_l], j \in [d_k]$.

	Hence, we have that
	\begin{equation}
		\dim 
		\Hom_{S_n}(
		(\mathbb{R}^{n})^{\otimes k} \otimes \mathbb{R}^{d_k},
		(\mathbb{R}^{n})^{\otimes l} \otimes \mathbb{R}^{d_l}
		)
		= 
		d_k d_l
		\sum_{t = 1}^{n} 
			\begin{Bmatrix}
				l+k\\
				t 
			\end{Bmatrix}
		=
		d_k d_l
		\Bell(l+k, n)
	\end{equation}
\end{theorem}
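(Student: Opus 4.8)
The plan is to adapt, essentially verbatim, the argument of Section~\ref{HomklPermModDecomp} that produced Theorem~\ref{equivbasislplusk}; the only new ingredient is that the feature spaces $\mathbb{C}^{d_k}$ and $\mathbb{C}^{d_l}$ carry the trivial $S_n$-action (as recorded in Remark~\ref{remarkfeatureone}), so the group never touches the feature coordinates. First I would fix the standard bases $\{e_j\}_{j\in[d_k]}$ of $\mathbb{C}^{d_k}$ and $\{e_i\}_{i\in[d_l]}$ of $\mathbb{C}^{d_l}$, so that $M_n^{\otimes k}\otimes\mathbb{C}^{d_k}$ has basis $\{v_J\otimes e_j\}$ on which $\sigma\in S_n$ acts by $v_J\otimes e_j\mapsto v_{\sigma(J)}\otimes e_j$, and similarly for the target. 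With these bases chosen, $\Hom(M_n^{\otimes k}\otimes\mathbb{C}^{d_k},M_n^{\otimes l}\otimes\mathbb{C}^{d_l})$ acquires a standard basis of matrix units $E_{I,i,J,j}$, indexed by $I\in[n]^l$, $J\in[n]^k$, $i\in[d_l]$, $j\in[d_k]$.

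Next I would rerun the coefficient-matching computation of (\ref{centeq2})--(\ref{centeq1}): expanding an arbitrary $f$ in the matrix-unit basis and imposing the equivariance condition (\ref{equivmapdefn}) yields that $f$ lies in $\Hom_{S_n}(M_n^{\otimes k}\otimes\mathbb{C}^{d_k},M_n^{\otimes l}\otimes\mathbb{C}^{d_l})$ if and only if $f_{\sigma(I),i,\sigma(J),j}=f_{I,i,J,j}$ for all $\sigma\in S_n$ and all index tuples. Hence the equivariant matrices are in bijection with the orbits of the $S_n$-action $\sigma\cdot(I,i,J,j)=(\sigma(I),i,\sigma(J),j)$ on $[n]^l\times[d_l]\times[n]^k\times[d_k]$; and since this action is trivial on the two feature factors, the orbit of $(I,i,J,j)$ is simply $O((I,J))\times\{i\}\times\{j\}$. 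By the bijection (\ref{blockbij}) between the orbits of $S_n$ on $[n]^{l+k}$ and the set partitions of $[l+k]$ with at most $n$ blocks, these orbits are therefore indexed by $\Pi_{l+k,n}\times[d_l]\times[d_k]$; summing the matrix units over each orbit reproduces exactly the elements $X_{\pi,i,j}$ of (\ref{equivbasiselementlpluskfeatures}), proving they are a basis. The dimension formula $|\Pi_{l+k,n}|\,d_l\,d_k=d_kd_l\,\Bell(l+k,n)$ then follows from Theorem~\ref{equivbasislplusk}.

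The conceptual one-line version, which I would also mention, is that because $\mathbb{C}^{d_k}$ and $\mathbb{C}^{d_l}$ are trivial modules there is a natural isomorphism $\Hom_{S_n}(M_n^{\otimes k}\otimes\mathbb{C}^{d_k},M_n^{\otimes l}\otimes\mathbb{C}^{d_l})\cong\Hom_{S_n}(M_n^{\otimes k},M_n^{\otimes l})\otimes\Hom_{\mathbb{C}}(\mathbb{C}^{d_k},\mathbb{C}^{d_l})$, so tensoring the basis $\{X_\pi\}$ of Theorem~\ref{equivbasislplusk} with the matrix-unit basis $\{E_{i,j}\}$ of $\Hom_{\mathbb{C}}(\mathbb{C}^{d_k},\mathbb{C}^{d_l})$ gives $\{X_{\pi,i,j}\}$ under $X_\pi\otimes E_{i,j}\leftrightarrow X_{\pi,i,j}$. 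There is no genuine obstacle here — the entire content is the inertness of the feature coordinates under $S_n$ — so the only point requiring a little care is the bookkeeping of index conventions, to be sure the sum in (\ref{equivbasiselementlpluskfeatures}) runs over a single $S_n$-orbit rather than a union of several.
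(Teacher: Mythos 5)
Your proposal is correct and follows essentially the same route as the paper, which simply notes that $M_n^{\otimes k}\otimes\mathbb{C}^{d_k}$ carries the $S_n$-action only on the first factor and then ``adapts Theorem~\ref{equivbasislplusk}''; your rerun of the coefficient-matching argument (showing the orbits on the enlarged index set are $O((I,J))\times\{i\}\times\{j\}$) is exactly the adaptation the paper intends, just spelled out. The tensor-product identification $\Hom_{S_n}(M_n^{\otimes k},M_n^{\otimes l})\otimes\Hom_{\mathbb{C}}(\mathbb{C}^{d_k},\mathbb{C}^{d_l})$ you mention is also consistent with how the paper treats features elsewhere.
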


\subsection{Biases} 

Including bias terms in the layer functions of a permutation equivariant neural network is harder, but it can be done.
We want to include such a term but still keep the entire layer function permutation equivariant.
Since the addition of two permutation equivariant functions 
is permutation equivariant, 
we can include bias terms by adding to each linear layer function
	\begin{equation}
		\phi: 
		((\mathbb{R}^{n})^{\otimes k}, \rho_{k}) 
		\rightarrow 
		((\mathbb{R}^{n})^{\otimes l}, \rho_{l}) 
	\end{equation}
	a permutation equivariant bias function
	$\beta : 
		((\mathbb{R}^{n})^{\otimes k}, \rho_{k}) 
		\rightarrow 
		((\mathbb{R}^{n})^{\otimes l}, \rho_{l})$
that is constant on all elements of 
$(\mathbb{R}^{n})^{\otimes k}$, that is,
	\begin{equation}
		\beta_l(v) = c \text{ for all } v \in (\mathbb{R}^{n})^{\otimes k}
	\end{equation}
for some constant element $c \in (\mathbb{R}^{n})^{\otimes l}$.
Since the bias function $\beta$ is permutation equivariant, it needs to satisfy
	\begin{equation} \label{biasequiv}
		c = \rho_l(g)c
	\end{equation}
for all $g \in S_n$ and $c \in (\mathbb{R}^{n})^{\otimes l}$.
Given that any $c \in (\mathbb{R}^{n})^{\otimes l}$ satisfying (\ref{biasequiv}) can be viewed as an element of $\Hom_{S_n}((\mathbb{R}^{n})^{\otimes 0}, (\mathbb{R}^{n})^{\otimes l})$, to find the matrix form of $c$, all we need to do is find the basis elements of $\Hom_{S_n}((\mathbb{R}^{n})^{\otimes 0}, (\mathbb{R}^{n})^{\otimes l})$.
But we saw how to do this in Section \ref{charPerm} ---
we can set $k = 0$, consider the vector space $P_0^l(n)$, and then apply Theorem \ref{weightmatclass}
to obtain the bias terms.

\section{A Generalisation to Layer Functions that are Equivariant to a Product of Symmetric Groups} \label{generalisationproduct}

We can generalise the results presented in Section \ref{charPerm}
to construct neural networks that are equivariant to a product of symmetric groups 
$S_{n_1} \times \dots \times S_{n_m}$.
The layers in this case are the (external) tensor product of some $k_r$-order tensor power of the permutation representation $\mathbb{R}^{n_r}$ of the symmetric group $S_{n_r}$, for each $r \in [m]$.
These permutation equivariant neural networks model local symmetries in data, since we can think of there being a total of $n_1 + \dots + n_m$ objects, but the network is constructed such that only the symmetries in each 
set of $n_r$ objects is respected.

Given groups $G$ and $H$, and representations $\rho_V : G \rightarrow GL(V)$ and $\rho_W : H \rightarrow GL(W)$, we can define the external tensor product representation of the direct product group $G \times H$, namely
\begin{equation}
	\rho_{V \boxtimes W} : G \times H \rightarrow GL(V \boxtimes W)
\end{equation}
as follows: as a vector space, $V \boxtimes W = V \otimes W$, and
\begin{equation}
	\rho_{V \boxtimes W}(g,h) = \rho_V(g) \otimes \rho_W(h) \text{ for all } g \in G \text{ and } h \in H.
\end{equation}
This definition can be extended in the obvious way to a direct product of any number of groups with their accompanying representations.


It can be shown that a map $f : V_1 \boxtimes W_1 \rightarrow V_2 \boxtimes W_2$ is $G \times H$-equivariant if and only if it is the tensor product $f_1 \otimes f_2$ of a $G$-equivariant map $f_1 : V_1 \rightarrow W_1$ and an $H$-equivariant map $f_2 : V_2 \rightarrow W_2$, and so, for linear maps, we have that
\begin{equation} \label{externalequiv}
	\Hom_{G \times H}(V_1 \boxtimes W_1, V_2 \boxtimes W_2)
	\cong
	\Hom_{G}(V_1, V_2) \otimes \Hom_{H}(W_1, W_2)
\end{equation}
As a result, we can generalise the construction presented in Section \ref{charPerm}
to find a basis of
\begin{equation} \label{genericHomspace}
	\Hom_{S_{n_1} \times \dots \times S_{n_m}}
	((\mathbb{R}^{n_1})^{\otimes {k_1}} \boxtimes \dots \boxtimes (\mathbb{R}^{n_m})^{\otimes {k_m}}, 
	 (\mathbb{R}^{n_1})^{\otimes {l_1}} \boxtimes \dots \boxtimes (\mathbb{R}^{n_m})^{\otimes {l_m}})
\end{equation}
We start by forming the tensor product vector space $P_{k_1}^{l_1}(n_1) \otimes \dots \otimes P_{k_m}^{l_m}(n_m)$, which has an orbit basis
consisting of $m$-length tuples of orbit basis elements from each of the individual vector spaces $P_{k_r}^{l_r}(n_r)$ in the direct product. 

We represent such tuples by placing the orbit basis diagrams from each of the individual vector spaces side-by-side, putting a red demarcation line between each adjacent pair: for example, the diagram
\begin{equation} \label{orbitexternalprodexample}
\scalebox{0.6}{\tikzfig{orbitexternalprodexi}}
\end{equation}
is an orbit basis element in $P_{2}^{1}(n_1) \otimes P_{1}^{1}(n_2) \otimes P_{2}^{2}(n_3) \otimes P_{3}^{0}(n_4)$, for any $n_1, n_2, n_3, n_4 \in \mathbb{Z}_{\geq 1}$.

Then the standard matrix basis of the $\Hom$-space given in (\ref{genericHomspace}) is simply the images of those orbit basis elements of 
\begin{equation}
	P_{k_1}^{l_1}(n_1) \otimes \dots \otimes P_{k_m}^{l_m}(n_m)
\end{equation}
whose $m$-length tuples, for each index $r \in [m]$, consist of only those orbit basis elements of $P_{k_r}^{l_r}(n_r)$ that correspond to a set partition in $\Pi_{l_r + k_r}$ having at most $n_r$ blocks,
under the map
\begin{equation} \label{externalprodmap}
	\Phi_{(k_1, \dots, k_m),(n_1, \dots, n_m)}^{(l_1, \dots, l_m)}
	\coloneqq
	\Phi_{k_1,n_1}^{l_1} \otimes \dots \otimes \Phi_{k_m,n_m}^{l_m}
\end{equation}
that maps $P_{k_1}^{l_1}(n_1) \otimes \dots \otimes P_{k_m}^{l_m}(n_m)$ onto the $\Hom$-space given in (\ref{genericHomspace}).

The map (\ref{externalprodmap}) is well defined because the $\Hom$-space given in (\ref{genericHomspace}) is isomorphic to
\begin{equation} \label{tensorgenericHomspace}
	\bigotimes_{r=1}^{m} \Hom_{S_{n_r}}((\mathbb{R}^{n_r})^{\otimes k_r}, (\mathbb{R}^{n_r})^{\otimes l_r})
\end{equation}
by a repeated application of (\ref{externalequiv}).

It is clear that the standard matrix basis of the $\Hom$-space given in (\ref{genericHomspace}) will be the tensor (Kronecker) product of the standard matrix basis elements of the individual $\Hom$-spaces given in (\ref{tensorgenericHomspace}), and so the dimension of this space is
\begin{equation} \label{dimensionsubspace}
	\prod_{r = 1}^{m} \Bell(l_r + k_r, n_r)
\end{equation}
Clearly, by setting $l_r = 0$ for all $r \in [m]$ in (\ref{dimensionsubspace}), the $S_{n_1} \times \dots \times S_{n_m}$--invariant maps have dimension $\prod_{r = 1}^{m} \Bell(k_r, n_r)$.

In other words, to find the basis of the $\Hom$-space given in (\ref{genericHomspace}), all we need to do is to consider all possible side-by-side combinations of the appropriate orbit basis diagrams for each of the individual vector spaces $P_{k_r}^{l_r}(n_r)$ (those in bijective correspondence with $\Pi_{l_r + k_r, n_r}$), and calculate, for each such combination, each constituent's image under its respective map $\Phi_{k_r, n_r}^{l_r}$, and then finally calculate the tensor (Kronecker) product of the resulting matrices.
This makes for quite a powerful diagrammatic method for finding such a basis.

We could also extend (\ref{genericHomspace}) by adding a dimension of features for each of the layers; in this case, the $\Hom$-space under consideration becomes
\begin{equation}
	\Hom_{S_{n_1} \times \dots \times S_{n_m}}
	(
	(\mathbb{R}^{n_1})^{\otimes {k_1}} \boxtimes \mathbb{R}^{d_{k_1}} \boxtimes \dots \boxtimes (\mathbb{R}^{n_m})^{\otimes {k_m}} \boxtimes \mathbb{R}^{d_{k_m}}, 
	(\mathbb{R}^{n_1})^{\otimes {l_1}} \boxtimes \mathbb{R}^{d_{l_1}} \boxtimes \dots \boxtimes (\mathbb{R}^{n_m})^{\otimes {l_m}} \boxtimes \mathbb{R}^{d_{l_m}}
	)
\end{equation}
and so, using the result of Theorem \ref{equivbasislpluskwithfeatures} together with (\ref{dimensionsubspace}), we see that the overall dimension of this $\Hom$-space is
$\prod_{r = 1}^{m} d_{k_r}d_{l_r}\Bell(l_r + k_r, n_r)$.


\subsection{An Example Recovering the Result of
\citet{Hartford}} \label{hartfordexample}

One immediate consequence of our work is that we can recover the result of \citet{Hartford} through our method and 
give an
explanation 
as to why their result holds
that is grounded in the language of the partition algebras.

In effect, Hartford et al. obtain a standard matrix basis of the $\Hom$-space given in (\ref{genericHomspace}) for the case where $l_r = k_r = 1$ and $n_r \geq 2$ for all $r \in [m]$, that is, for
\begin{equation} \label{Hartford}
	\Hom_{S_{n_1} \times \dots \times S_{n_m}}
	(\mathbb{R}^{n_1} \boxtimes \dots \boxtimes \mathbb{R}^{n_m},
	\mathbb{R}^{n_1} \boxtimes \dots \boxtimes \mathbb{R}^{n_m})
\end{equation}
For example, the weight matrix for
$\Hom_{S_2 \times S_2 \times S_2}
(\mathbb{R}^2 \boxtimes \mathbb{R}^2 \boxtimes \mathbb{R}^2,
\mathbb{R}^2 \boxtimes \mathbb{R}^2 \boxtimes \mathbb{R}^2)$ 
is
\begin{equation}
	\NiceMatrixOptions{code-for-first-row = \scriptstyle \color{blue},
                   	   code-for-first-col = \scriptstyle \color{blue}
	}
	\begin{bNiceArray}{*{2}{c}|*{2}{c}|*{2}{c}|*{2}{c}}[first-row,first-col]
	\RowStyle[cell-space-limits=3pt]{\rotate}
				& 1,1,1 	& 1,1,2		& 1,2,1 	& 1,2,2		& 2,1,1 	& 2,1,2		& 2,2,1 	& 2,2,2	 \\
		1,1,1		& \lambda_1 	& \lambda_2 	& \lambda_3 	& \lambda_4	& \lambda_5	& \lambda_6	& \lambda_7	& \lambda_8	\\
		1,1,2		& \lambda_2 	& \lambda_1 	& \lambda_4 	& \lambda_3 	& \lambda_6	& \lambda_5	& \lambda_8	& \lambda_7	\\
		\hline
		1,2,1		& \lambda_3 	& \lambda_4 	& \lambda_1  	& \lambda_2 	& \lambda_7	& \lambda_8	& \lambda_5	& \lambda_6 	\\
		1,2,2		& \lambda_4 	& \lambda_3 	& \lambda_2 	& \lambda_1 	& \lambda_8	& \lambda_7	& \lambda_6	& \lambda_5	\\
		\hline
		2,1,1		& \lambda_5 	& \lambda_6 	& \lambda_7 	& \lambda_8 	& \lambda_1	& \lambda_2 	& \lambda_3	& \lambda_4	\\
		2,1,2		& \lambda_6 	& \lambda_5 	& \lambda_8 	& \lambda_7 	& \lambda_2 	& \lambda_1	& \lambda_4	& \lambda_3	\\
		\hline
		2,2,1		& \lambda_7 	& \lambda_8 	& \lambda_5 	& \lambda_6 	& \lambda_3	& \lambda_4	& \lambda_1 	& \lambda_2 	\\
		2,2,2		& \lambda_8 	& \lambda_7 	& \lambda_6 	& \lambda_5 	& \lambda_4	& \lambda_3	& \lambda_2 	& \lambda_1 	
	\end{bNiceArray}
\end{equation}
for weights $\lambda_1, \lambda_2, \dots, \lambda_8 \in \mathbb{R}$, which
is obtained by mapping the orbit basis of the tensor product of partition algebras $P_1^1(2) \otimes P_1^1(2) \otimes P_1^1(2)$ onto 
$\Hom_{S_2 \times S_2 \times S_2}
(\mathbb{R}^2 \boxtimes \mathbb{R}^2 \boxtimes \mathbb{R}^2,
\mathbb{R}^2 \boxtimes \mathbb{R}^2 \boxtimes \mathbb{R}^2)$ 
via the map $\Phi_{(1,1,1),(2,2,2)}^{(1,1,1)}$ as defined in (\ref{externalprodmap}).

It is clear that, in the general case, the weight matrix can be obtained by mapping the orbit basis of the tensor product of partition algebras $P_1^1(n_1) \otimes P_1^1(n_2) \otimes \dots \otimes P_1^1(n_m)$
onto the space given in (\ref{Hartford}) via the map $\Phi_{(1,1, \dots, 1),(n_1,n_2, \dots, n_m)}^{(1,1, \dots, 1)}$.


\section{PyTorch Implementation of Permutation Equivariant Weight Matrices}

In this section, we provide a PyTorch implementation of all of the permutation equivariant weight matrices for any symmetric group $S_n$ and for all possible tensor power spaces of $\mathbb{R}^n$.
The weight matrices will be instances of the class \texttt{SymmetricGrpEquivLinear}.

\begin{lstlisting}
# symmequiv.py

import torch
import torch.nn as nn
import torch.optim as optim

from .lib import symmpartitions

class SymmetricGrpEquivLinear(nn.Module):
    """
    Creates a trainable S_n-equivariant linear layer 
    (R^n)^{otimes k} rightarrow (R^n)^{otimes l} using
    the orbit basis for the partition vector space P_k^l(n)
    to create the weight matrices.
    
    dim_n: the dimension of the space, i.e the n in S_n
    order_k: the tensor power k
    order_l: the tensor power l
    """

    def __init__(self, dim_n: int, order_k: int, order_l: int):
        super().__init__()
        self.n = dim_n
        self.k = order_k
        self.l = order_l
        
        #Trick used below to get everything on same device!
        self.dummy_param = nn.Parameter(torch.empty(0))
        
        self.basis_set_matrices, self.num_weights = self.__basis_set_matrices_generation()

        self.weights = nn.ParameterList([])
        for i in range(self.num_weights): 
            self.weights.append(nn.Parameter(torch.randn(())))

    def print_weights(self) -> None:
        for i in range(len(self.weights)):
            print(self.weights[i])

    def __basis_set_matrices_generation(self):
        part_lst_by_indices = symmpartitions.set_partition_weight_matrices_by_indices(
                dim=self.n, order_k=self.k, order_l=self.l
                )
        matrices = []
        for val in part_lst_by_indices:
            mat = torch.zeros(pow(self.n, self.l),pow(self.n, self.k))
            for ind in val:
                mat[ind[0]][ind[1]] = 1
            matrices.append(mat)
        return matrices, len(matrices)

    def forward(self, X):
        #Trick to get everything on the same device for training etc.
        device = self.dummy_param.device

        #Move all weight matrices onto device first before performing calculations.
        for i in range(len(self.basis_set_matrices)):
            self.basis_set_matrices[i] = self.basis_set_matrices[i].to(device)

        #Move weight_matrix onto device before calculating its value.
        weight_matrix = torch.zeros(pow(self.n, self.l),pow(self.n, self.k)).to(device)
    
        for weight_index, mat in enumerate(self.basis_set_matrices):
            weight_matrix += mat * self.weights[weight_index]
        
        linear = torch.einsum('ij,kj->ki', weight_matrix, X)    # allows for batch processing
        return linear

\end{lstlisting}

\begin{lstlisting}
# symmpartitions.py 

import itertools
import more_itertools

from .setpartitions import *

def set_partitions(order_k: int, order_l: int, max_blocks: int) -> list:
    """
    Calculates a list of all set partitions having at most max_blocks blocks.
    Here, we assume that the set partition corresponds to a set partition diagram
    having order_k nodes at the bottom and order_l nodes at the top.

    Returns
    -------
    list
    """

    # Maximum number of blocks cannot be more than the sum of the orders!
    total_order = order_k + order_l
    if max_blocks > total_order:
        max_blocks = total_order

    res = []
    for i in range(1, order_k + order_l + 1):
        res.append(i)
    collection = tuple(res)

    set_part = []
    for k in range(1, max_blocks + 1):
        s1 = more_itertools.set_partitions(collection,k)
        for part in s1:
            set_part.append(part)

    return set_part

def set_partition_weight_matrices_by_indices(dim: int, order_k: int, order_l: int) -> list:
    """
    Returns a list consisting of lists of indices, where each 
    list of indices corresponds to where a partition spanning matrix is non-zero,
    and the number of set partitions that appeared in the calculation.

    Returns
    -------
    list
    """
    lst = []
    set_parts = set_partitions(order_k, order_l, dim)
    for set_part in set_parts:
        part_indices = set_part_diag_basis_indices_list_ord(set_part, dim)
        mat_indices = mat_indices_list(part_indices, dim, order_k, order_l)
        lst.append(mat_indices)
    return lst
\end{lstlisting}

\begin{lstlisting}
# setpartitions.py

import itertools
import more_itertools

def convert_tuple_to_matrix_index(indices: tuple, dim: int, order_k: int) -> int:
    """
    Helper function that converts a tuple with indices
    (i_1, ..., i_k) where i_j is an element of {1, ..., dim}
    that indexes a tensor to the equivalent index that indexes
    the same tensor in matrix form.

    Returns
    -------
    int
    """
    assert(len(indices) == order_k)
    total = 0
    for index in indices:
        total += pow(dim, order_k - 1)*(index - 1)
        order_k -=1
    return total 

def convert_int_to_tuple(num: int, dim: int, order_k: int) -> list: 
    """
    The reverse function of convert_tuple_to_matrix_index().
    Converts an integer back into a tuple of length order_k
    according to the base = dim

    Returns
    -------
    list
    """
    lst = []
    for i in range(order_k):
        if num < dim:
            lst.append(int(num+1))
            num = 0
        else:
            val = num % dim
            lst.append(int(val+1))
            sub = pow(dim, order_k - (order_k + i))*val
            num -= sub 
            num //= dim
    lst.reverse()
    return lst

def set_part_diag_basis_indices_list_ord(lst: list, dim: int) -> list:
    """
    Takes a pattern corresponding to a set partition
    and calculates all indices that match it 
    in the orbit basis, based on the value of dim (== n).
    
    Note that this returns a list of the form [I,J], i.e both the row
    and the column tuples, that can be divided appropriately
    by the function mat_indices_list according to the orders k, l.

    Returns
    -------
    list
    """
    num_blocks = len(lst)

    # Total sum of the tensor power orders
    total_tensor_orders = sum(len(sublst) for sublst in lst)

    # Enumerate the blocks given in the input lst
    blocks = {i: list(lst[i-1]) for i in range(1, num_blocks+1)}
    
    indices_lst = []    
    for i in range(pow(dim, num_blocks)):
        block_labels = convert_int_to_tuple(i,dim,num_blocks)  

        # Block labels must all be different 
        if len(block_labels) != len(set(block_labels)): 
            continue

        # vertex_labels consists of pairs of the form
        # diagram vertex label : value that appears in the [I,J] list for that vertex
        vertex_labels = {}
        for j in range(num_blocks):
            I_J_val = block_labels[j]
            block = blocks[j+1]
            for k in range(len(block)):
                vertex_labels[block[k]] = I_J_val
        list_tup = [vertex_labels[i] for i in range(1, total_tensor_orders+1)] 
        indices_lst.append(list_tup)
    
    return indices_lst

def mat_indices_list(indices_lst: list, dim: int, order_k: int, order_l: int) -> list:
    """
    Converts a list of tuple indices corresponding to a set partition, 
    for a given dim and orders k,l into their equivalent matrix index form.

    Returns
    -------
    list
    """

    assert(len(indices_lst[0]) == order_k + order_l)
   
    lst = []
    for indices in indices_lst:
        row_indices = indices[:order_l]
        col_indices = indices[order_l:]
        row_index = convert_tuple_to_matrix_index(row_indices, dim, order_l)
        col_index = convert_tuple_to_matrix_index(col_indices, dim, order_k)
        lst.append([row_index, col_index])
    return lst
\end{lstlisting}


\begin{example}
We obtain the basis of matrices that determine the permutation equivariant weight matrices
that appear in Examples \ref{114example} and \ref{lessBell}
using the code that is provided above.

\begin{lstlisting}
# example.py: displays basis matrices that are generated using our linear layers.

from nn.symmequiv import SymmetricGrpEquivLinear

symm_model_4_1_1 = SymmetricGrpEquivLinear(dim_n = 4, order_k = 1, order_l = 1)
print("Number of weights: ", len(symm_model_4_1_1.basis_set_matrices))
for basis_matrix in symm_model_4_1_1.basis_set_matrices:
    print(basis_matrix)

symm_model_2_2_2 = SymmetricGrpEquivLinear(dim_n = 2, order_k = 2, order_l = 2)
print("Number of weights: ", len(symm_model_2_2_2.basis_set_matrices))
for basis_matrix in symm_model_2_2_2.basis_set_matrices:
    print(basis_matrix)

\end{lstlisting}

\begin{verbatim}
% python3 example.py

Number of weights:  2
tensor([[1., 0., 0., 0.],
        [0., 1., 0., 0.],
        [0., 0., 1., 0.],
        [0., 0., 0., 1.]])
tensor([[0., 1., 1., 1.],
        [1., 0., 1., 1.],
        [1., 1., 0., 1.],
        [1., 1., 1., 0.]])

Number of weights:  8
tensor([[1., 0., 0., 0.],
        [0., 0., 0., 0.],
        [0., 0., 0., 0.],
        [0., 0., 0., 1.]])
tensor([[0., 0., 0., 0.],
        [0., 0., 0., 1.],
        [1., 0., 0., 0.],
        [0., 0., 0., 0.]])
tensor([[0., 0., 0., 1.],
        [0., 0., 0., 0.],
        [0., 0., 0., 0.],
        [1., 0., 0., 0.]])
tensor([[0., 0., 0., 0.],
        [1., 0., 0., 0.],
        [0., 0., 0., 1.],
        [0., 0., 0., 0.]])
tensor([[0., 1., 0., 0.],
        [0., 0., 0., 0.],
        [0., 0., 0., 0.],
        [0., 0., 1., 0.]])
tensor([[0., 0., 0., 0.],
        [0., 0., 1., 0.],
        [0., 1., 0., 0.],
        [0., 0., 0., 0.]])
tensor([[0., 0., 0., 0.],
        [0., 1., 0., 0.],
        [0., 0., 1., 0.],
        [0., 0., 0., 0.]])
tensor([[0., 0., 1., 0.],
        [0., 0., 0., 0.],
        [0., 0., 0., 0.],
        [0., 1., 0., 0.]])
\end{verbatim}
\end{example}


\end{document}